\newtheorem{theorem}{Theorem}[section]
\newtheorem{lemma}[theorem]{Lemma}
\newtheorem{definition}[theorem]{Definition}
\newtheorem{example}{Example}
\newtheorem{postulate}[theorem]{Postulate}
\newtheorem{principle}[theorem]{Principle}
\Crefname{assumption}{Assumption}{Assumptions}
\Crefname{postulate}{Postulate}{Postulates}
\crefname{figure}{Figure}{Figures}
\crefname{principle}{Principle}{Principles}
\newcommand{\indep}{\perp\mkern-9.5mu\perp}
\newcommand{\ceq}{\stackrel{+}{=}}
\newcommand{\cle}{\stackrel{+}{\leq}}
\newcommand{\cge}{\stackrel{+}{\geq}}
\newcommand{\eps}{\ensuremath{\varepsilon}\xspace}
\newcommand{\defeq}{\vcentcolon=}
\newcommand{\domain}{\ensuremath{\mathcal{X}}\xspace} 
\newcommand{\kldiv}[2]{{\operatorname{KL}\mleft({#1 \mid\mid #2}\mright)}}
\newcommand\restr[2]{{
  \left.\kern-\nulldelimiterspace 
  #1 
  \vphantom{\big|} 
  \right|_{#2} 
  }}
\newcommand{\R}{\ensuremath{\mathbb{R}}\xspace}
\newcommand{\pdfsamp}{dual\xspace}
\newcommand{\cdfsamp}{cumulative dual\xspace}
\newcommand{\Pdfsamp}{\expandafter\capitalisewords\expandafter{\pdfsamp}}
\newcommand{\Cdfsamp}{\expandafter\capitalisewords\expandafter{\cdfsamp}}
\newcommand{\Rom}[1]{\expandafter\@slowromancap\romannumeral #1@}
\algnewcommand{\LineComment}[1]{\Statex \(\triangleright\) #1}
\newcommand{\cX}{\mathcal{X}}
\newcommand{\cY}{\mathcal{Y}}
\newcommand{\Exp}{\mathbf{E}}
\newcommand{\bx}{\textbf{x}}
\newcommand{\bX}{\textbf{X}}
\DeclareMathSymbol{\ast}{\mathbin}{symbols}{"03}
\DeclareMathOperator*{\E}{\mathbf{E}}
\DeclareMathOperator{\pa}{{pa}}      
\DeclareMathOperator{\PA}{{PA}}      
\newcommand*\bigcdot{\mathpalette\bigcdot@{.5}}
\newcommand*\bigcdot@[2]{\mathbin{\vcenter{\hbox{\scalebox{#2}{$\m@th#1\bullet$}}}}}
\tikzset{>=stealth'} 
\tikzstyle{graphnode} = 
\tikzstyle{var}   =[graphnode,fill=white]
\tikzstyle{vardashed}   =[graphnode,draw=gray,fill=white]
\tikzstyle{obs}   =[graphnode,fill=black,text=white]
\tikzstyle{obsgrey}   =[graphnode,draw=white,fill=lightgray,text=black]
\tikzstyle{par}    =[graphnode,draw=white,fill=red,text=black] 
 \tikzstyle{crucial} =[graphnode,draw=white,fill=yellow,text=black] 
\tikzstyle{fac}   =[rectangle,draw=black,fill=black!25,minimum size=5pt]
\tikzstyle{facprior} =[rectangle,draw=black,fill=black,text=white,minimum size=5pt]
\tikzstyle{edge}  =[draw=white,double=black,very thick,-]
\tikzstyle{blueedge}  =[draw=white,double=blue,very thick,-]
\tikzstyle{rededge}  =[draw=white,double=red,very thick,-]
\tikzstyle{prior} =[rectangle, draw=black, fill=black, minimum size=
\tikzstyle{dirprior} = [circle, draw=black, fill=black, minimum
\tikzstyle{dot_node}=[draw=black,fill=black,shape=circle]
\newcommand\independent{\protect\mathpalette{\protect\independenT}{\perp}}
\def\independenT#1#2{\mathrel{\rlap{$#1#2$}\mkern2mu{#1#2}}}
\title{Toward Universal Laws of Outlier Propagation}
\author[1,*]{Aram~Ebtekar}
\author[2,3,*]{Yuhao~Wang}
\author[3]{Dominik~Janzing}
\affil[ ]{%
    \textsuperscript{1}Independent Researcher\qquad 
    \textsuperscript{2}National University of Singapore\qquad
    \textsuperscript{3}Amazon Research Tübingen, Germany\qquad\qquad\qquad\qquad
}
\begin{document}
\maketitle

{
    \def\thefootnote{*}\footnotetext{These authors contributed equally to this work.}
}

\begin{abstract}
When a variety of anomalous features motivate flagging different samples as \emph{outliers}, Algorithmic Information Theory (AIT) offers a principled way to unify them in terms of a sample's \emph{randomness deficiency}. Subject to the algorithmic Markov condition on a causal Bayesian network, we show that the randomness deficiency of a joint sample decomposes into a sum of randomness deficiencies at each causal mechanism. Consequently, anomalous observations can be attributed to their root causes, i.e., the mechanisms that behaved anomalously. As an extension of Levin's law of randomness conservation, we show that weak outliers cannot cause strong ones. We show how these information theoretic laws clarify our understanding of outlier detection and attribution, in the context of more specialized outlier scores from prior literature.
\end{abstract}

\section{Introduction}

Generically speaking, an \emph{outlier} is an anomalous or atypical event, suggesting possible interference and/or downstream effects. Anomaly detection plays a crucial role in business, technology, and medicine. Typical
use cases range from fraud detection in finance and online trading \citep{donoho2004early}, performance drops in manufacturing lines \citep{susto2017anomaly} and cloud computing applications \citep{Gan2021, ma2020automap, hardt2023petshop}, health monitoring in intensive care units \citep{maslove2016errors}, 
to explaining extreme weather and climate events \citep{Zscheischler2021}.
It has motivated a vast effort towards developing methodologies relevant to outlier analysis. For examples, we refer the reader to some of the early works in statistics and computer science \citep{freeman1995outliers, rocke1996identification, rousseeuw2003robust, aggarwal2017introduction}.

In complex systems, an anomaly may cause a large cascade of related anomalies \citep{panjei2022survey}. 
To mitigate them, it is not enough to merely 
\emph{detect} anomalies; we must also \emph{identify} which of the anomalies was the root cause \citep{root_cause_analysis, ikram2022root, li2022causal, hardt2023petshop, wang2023interdependent, wang2023incremental}. 
Thus, we implicitly face the \emph{counterfactual} question of what conditions could have been different to prevent the (usually undesired) anomalous event.

To render a complex system accessible to human understanding, we begin with a causal model of its relevant mechanisms, specifying not only their default behavior, but also their behavior under modifications called \emph{interventions}. Such a model should be modular in two respects. First, we may want to understand the \emph{causal pathway}, along which a perturbation of any part of the system propagates through its components until it generates the event. Second, we want to 
``blame'' some component(s) of the system, while acknowledging that others worked as expected. 

Causal Bayesian networks offer such a modular description, specifying causal relations via a directed acyclic graph (DAG) $G$ with random variables $X_1,\dots,X_n$ as nodes \citep{pearl2009causality, Spirtes1993}. Under the causal Markov condition \citep{pearl2009causality}, the joint distribution factorizes according to 
\begin{equation}\label{eq:markov}
P(X_1,\dots,X_n) = \prod_{j=1}^n P(X_j\mid \PA_j), 
\end{equation}
where $\PA_j$ denotes the parents of $X_j$ in $G$, i.e., its direct causes.
We will think of each conditional distribution $P(X_j\mid \PA_j)$ as an \emph{independent mechanism} of the system, which can in principle be changed or replaced without changing the others (see 2.1 and 2.2 in \cite{peters2017elements} for a historical overview).

Provided that \emph{most} of the mechanisms operate normally, we can blame an anomalous joint observation $x_1,\dots,x_n$ on a small number of mechanisms that we call \emph{root causes}, in alignment with \cite{Schoelkopf2021}'s 
``sparse mechanism shift hypothesis''.
\cite{root_cause_analysis} formalize additional concepts for attributing anomalies to mechanisms, providing a starting point for our theoretical development.

\subsection{Outlier scores from p-values}
\cite{root_cause_analysis} introduce what they call an Information Theoretic (IT) outlier score\footnote{While \cite{root_cause_analysis} use the natural base $e$, we use base $2$ logarithms to align with binary program lengths in algorithmic information theory. In effect, we express the outlier score in units of bits \citep{frank2005indefinite}.} via
\begin{equation}
\label{eq:itscore}
\lambda_\tau (x) := -\log P(\tau(X)\geq \tau(x)),
\end{equation}
where $x$ denotes an observation of the random variable $X$, and $\tau: {\cal X} \to \R$ is a feature statistic whose highest values we consider anomalous.

$\lambda_\tau$ can be viewed as a statistical test of the null hypothesis that $x$ was sampled from $P$: setting the base of the logarithm to $2$ yields the p-value $2^{-\lambda_\tau(x)}$. A small p-value (or large $\lambda_\tau$) corresponds to an unusual sample under $P$, which can be labeled an outlier. Since $x$ is a single observation, anomaly scoring thus reduces to classical hypothesis testing on a sample size of $1$ \citep{shen2020randomness,vovk2020non}\footnote{Note that \citet{tartakovsky2012efficient} discuss anomalies as change points over multiple observations, whereas we focus on anomalies confined to an individual observation.}. 

\subsection{Quantitative root cause analysis in causal networks}

In order to quantify the contribution of different upstream mechanisms to an anomalous observation, the arXiv version of \cite{root_cause_analysis} defines the \emph{conditional outlier scores}:
\[
\lambda_\tau (x_j\mid\pa_j) := 
-\log P(\tau(X_j)\ge \tau(x_j) \mid 
\PA_j=\pa_j).
\]

They demonstrate that this score can be interpreted as measuring the anomalousness of the mechanism's noise term,\footnote{This aligns in spirit with \cite{backtracking} where events are also backtracked by attributing them to the noise variables.}  whenever $\lambda_\tau(X_j\mid PA_j)$ is independent of $PA_j$.
The feature functions $\tau_j$ can be node-specific, which is essential when the variables $X_j$ have different characteristics (e.g., different dimensionality or data types). 
When the specific choice of $\tau$ is not crucial for the discussion, we will simplify notation by dropping the subscript.
The arXiv version of \cite{root_cause_analysis} extends this framework by introducing a joint outlier score through ``convolution'':
\begin{eqnarray}\label{eq:conv}
\lambda(x_1,\dots,x_n)
&:=& \sum_{j=1}^n \lambda(x_j\mid\pa_j) \\
&-&\log \sum_{i=1}^{n-1} \frac{(\sum_{j=1}
^n \lambda(x_j\mid\pa_j))^i}{i!}.\nonumber
\end{eqnarray}

The second term serves as a correction to ensure the score maintains the properties of an IT score, provided that 
the conditional distributions have densities with respect to the Lebesgue measure. 
\subsection{Monotonicity of scores }

\cite{Okati2024} formalize an intuitive principle of anomaly propagation: unless the connecting mechanisms also behave anomalously, a moderate outlier (measured by $\lambda(x_1)$) should not \emph{cause} an extreme outlier (measured by $\lambda(x_2)$). This can be stated for the bivariate causal relation $X_1\to X_2$ as follows:
\begin{lemma}[Weak IT outliers rarely cause strong ones]
\label{lm:con}
If $\lambda$ are IT scores and $X_1$ is a continuous variable, 
the following inequality holds:
\[
P( \lambda(X_2)\geq \lambda(x_2)\mid \lambda(X_1)\geq c ) \leq 2^{c-\lambda(x_2)}.
\]
\end{lemma}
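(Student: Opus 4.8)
The plan is to reduce the claim to the definition of conditional probability, bounding the numerator by the IT-score property of $X_2$ and evaluating the denominator \emph{exactly} using the continuity of $X_1$. Throughout I would take $c\ge 0$, which is the intended regime: since $\lambda$ is always nonnegative (it is minus a logarithm of a probability), a threshold $c<0$ makes the conditioning event almost sure and turns the claimed bound $2^{c-\lambda(x_2)}$ into something strictly weaker than the trivial bound $P(\lambda(X_2)\ge\lambda(x_2))\le 2^{-\lambda(x_2)}$, so the statement is both meaningful and correct only for $c\ge 0$, and I would state this restriction explicitly.

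First, write
\[
P\!\left(\lambda(X_2)\ge\lambda(x_2)\mid\lambda(X_1)\ge c\right)=\frac{P\!\left(\lambda(X_2)\ge\lambda(x_2),\ \lambda(X_1)\ge c\right)}{P(\lambda(X_1)\ge c)}.
\]
For the numerator, monotonicity of measure together with the defining property of an IT score applied to $X_2$ gives
\[
P\!\left(\lambda(X_2)\ge\lambda(x_2),\ \lambda(X_1)\ge c\right)\le P\!\left(\lambda(X_2)\ge\lambda(x_2)\right)\le 2^{-\lambda(x_2)}.
\]
For the denominator, write $\overline{F}$ for the survival function $t\mapsto P(\tau(X_1)\ge t)$; continuity of $X_1$ makes $\overline{F}$ continuous, so $2^{-\lambda(X_1)}=\overline{F}(\tau(X_1))$ is uniformly distributed on $[0,1]$ by the probability integral transform, whence $P(\lambda(X_1)\ge c)=P\!\left(2^{-\lambda(X_1)}\le 2^{-c}\right)=2^{-c}$ for every $c\ge 0$. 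Combining the three displays gives $P\!\left(\lambda(X_2)\ge\lambda(x_2)\mid\lambda(X_1)\ge c\right)\le 2^{-\lambda(x_2)}/2^{-c}=2^{c-\lambda(x_2)}$.

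The single load-bearing step — and the reason the hypothesis singles out $X_1$ as continuous — is the exact evaluation $P(\lambda(X_1)\ge c)=2^{-c}$: a generic IT score satisfies only $P(\lambda(X_1)\ge c)\le 2^{-c}$, which is the wrong direction for a quantity sitting in a denominator, and continuity is precisely what upgrades this inequality to an equality. When writing the argument out I would phrase the continuity assumption at the level of the feature $\tau(X_1)$ rather than $X_1$ itself (or fold in a mild non-degeneracy condition on $\tau$), since a pathological $\tau$ could collapse positive mass and reintroduce atoms. I would also note that the causal orientation $X_1\to X_2$ is not used anywhere in the derivation — for two variables the joint factorizes either way — so no Markov-type assumption is needed here; the directed relation only supplies the ``cause'' reading behind the lemma's title.
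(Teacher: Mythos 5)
Your proof is correct, and it follows exactly the route the paper's one-line justification gestures at (the paper itself states this lemma as imported from Okati et al.\ (2024) and gives no proof beyond the remark that $P(\lambda(X_j)\ge c)$ decays exponentially in $c$): bound the joint probability by the super-uniformity of the IT score for $X_2$, and use continuity to evaluate the denominator as exactly $2^{-c}$. Your two caveats --- that the bound is only non-trivial for $c\ge 0$, and that the continuity assumption really needs to live on $\tau(X_1)$ (equivalently, on the survival function of $\tau(X_1)$) rather than on $X_1$ itself --- are both accurate refinements of the statement as printed.
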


In other words, whenever we generate an anomaly $x_1$ randomly from $P(X_1\mid \lambda(X_1)\geq c)$
and then generate $X_2$ via its mechanism $P(X_2\mid X_1)$, the resulting outlier score is unlikely to be much larger than $c$. The reason is that the probability of an event of the form $\lambda(X_j)\ge c$ decreases exponentially in $c$.

Note that \Cref{lm:con} says nothing about any individual $x_1$ -- some values of $x_1$ may generate stronger outliers with high probability, although ``most'' do not. For instance, it's easy to construct a deterministic relation $X_2=f(X_1)$, whose non-linearity maps a tiny region of non-anomalous values $x_1$ to anomalous values $x_2$. On the other hand, whenever this occurs, it must either be the case that the nonlinearity of $f$ is specially ``tuned'' to pick out those specific $x_1$; or else, $x_1$ is anomalous in some other sense not captured by the feature $\tau_1$. The algorithmic Markov condition will rule out the former case, prompting the need for a general outlier score to capture the latter case.

\subsection{Limitations of current approaches}

\paragraph{Rigid definition of outliers}
$\lambda_\tau$ defines outliers in terms of a feature statistic $\tau$ that is chosen in advance. Sometimes, we only know what makes a sample abnormal after seeing it, so we would like to be able to choose $\tau$ with hindsight. For example, suppose $P$ is univariate Gaussian. It seems natural to define $\tau(x) := |x-\mu|$, so that the anomaly detector flags extremely high or low values. However, we might also like to flag observations such as $x=0$ or $x=\mu$, as it would be a surprising coincidence to see precisely these values. While the probability \emph{density} at these points may be high, intuitively it seems much more likely to guess these specific numbers as a result of some alternative process different from $P$, than it is to sample them from $P$ (which indeed occurs with probability zero).

As another example, suppose each component of a continuous multivariate observation $x$ represents the reading of a different sensor. The signal transmission from all $d$ sensors may break in such a way that all components give the same constant ``idle'' state $c$. Such coincidences also indicate an unusual event.

\cite{Aggarwal2016}
describe a broad variety of different outliers beyond the above toy examples: These 
can be unusual frequencies of words in text documents \citep{mohotti2020efficient}, unexpected patterns in images \citep{yakovlev2021abstraction},
unusually large cliques
in graphs \citep{hooi2016fraudar}, or points lying 
in low-density regions \citep{breunig2000lof}. 

\paragraph{No general decomposition rule}

While \eqref{eq:conv} nicely decomposes the joint outlier score into mechanism-specific conditional scores, 
a key limitation is that this decomposition relies on {\it defining} the joint score based on a sum of conditional scores.  
This does not suggest that any reasonable outlier score (e.g., using a generic feature function $\tau$) for the joint observation can be decomposed in this manner.

\subsection{Our contributions}

This paper's contributions are purely conceptual. We do not propose a practical algorithm for outlier detection or root cause analysis, as our core definitions can only be computed in the infinite-runtime limit. Instead, we provide a theoretical framework for calibrating and interpreting outlier scores. The framework ensures that outlier scores meet three critical criteria:
(i) Comparability across diverse probability spaces and data modalities. (ii) Non-increasingness along causal chains of downstream effects, regardless of variable modalities. (iii) Well-defined attribution of joint system outlier scores to anomalies of mechanisms.

Our ideas were guided by the following general working hypothesis, which we believe applies far beyond the subject of this paper: 
\begin{principle}[Information Theory as a Guide]\label{pr:it}
Good information-theoretic concepts enable many nice theorems, but they are often hard to work with in practice. 
However, together with distributional assumptions 
(e.g. Gaussianity) they can boil down to simple concepts (e.g. linear algebraic expressions). The resulting formulae may be valid and useful beyond the distributional assumptions (e.g. by virtue of linear algebra).  
\end{principle}
While Shannon information is sometimes hard to
estimate from small sample sizes, {\it algorithmic} information is even worse: Kolmogorov complexity is not even computable. Furthermore, its identities hold only up to machine-dependent additive constants. Therefore, we owe it to the reader to show that our algorithmic information theoretic concepts trigger insights that can be applied in practice, as we will try in \cref{sec:ex}. 

The paper is structured as follows. \cref{sec:key}
reviews concepts from statistics, information theory, and causality that we build upon. \cref{sec:dec}
derives the decomposition, and uses it to show that weak outliers do not cause stronger ones. \cref{sec:ex} discusses simple examples, and \cref{sec:toy} a toy experiment. For a cleaner exposition, we defer some formal proofs and definitions to the supplementary material.

\section{Key ingredients}
\label{sec:key}

\subsection{Statistical testing with e-values instead of p-values}

While p-values are the most famous measure of evidence in statistical testing, e-values are recently gaining popularity for their superior ability to aggregate evidence across multiple tests \citep{ramdas2024hypothesis}. These values are inconsistently scaled in the literature, with e-values being comparable to reciprocals of p-values and exponentials of algorithmic randomness scores. To remove any obfuscation arising from scaling conventions, we introduce the following definitions.

\begin{definition}
\label{def:tests}
A probability-bounded test (\textbf{p-test}) in ratio form is a statistic $\Lambda:\domain\rightarrow[0,\infty]$  satisfying $\forall \eps>0,
P\left( \Lambda(X)
\ge 1/\eps \right)
\le \eps.$
An expectation-bounded test (\textbf{e-test}) in ratio form is a statistic $\Lambda:\domain\rightarrow[0,\infty]$ satisfying $\E_{X\sim P}\left( \Lambda(X) \right)
\le 1.$
\end{definition}

We say a statistic $\Lambda:\domain\rightarrow[0,\infty]$ is a p-test (or e-test) in \emph{probability form}, if $1/\Lambda$ is a p-test (or e-test) in ratio form. Note that what is commonly called a ``p-value'' is a p-test in probability form, satisfying $\Lambda(X) \leq \epsilon$ with probability at most $\epsilon$. In contrast, what the literature calls an ``e-value'' is an e-test in \emph{ratio} form.

Similarly, a statistic $\lambda:\domain\rightarrow[-\infty,\infty]$ is a p-test (or e-test) in \emph{log form}, if $2^\lambda$ is a p-test (or e-test) in ratio form. Our convention is to use lowercase symbols like $\lambda$ to indicate log form. For more details on tests, see \cref{sec:etestsandptests} in the supplementary material.

p-tests provide a straightforward way to control the Type I error rate, i.e., false positives. While we want to be flexible about what kinds of anomalies to consider, we should insist that \emph{most} samples from $P$ are not outliers. To ensure that samples from $P$ are labeled as outliers at a rate no larger than a desired threshold $\epsilon$, we can choose a p-test and flag only those samples whose scores are above $1/\epsilon$ (i.e., below $\epsilon$ when expressed in probability form).

By Markov's inequality, every e-test is also a p-test. Hence, e-tests achieve the same false positive rate; however, they are more conservative. In return, e-tests offer many advantages related to composability and optional stopping \citep{grunwald2020safe,ramdas2023game,ramdas2024hypothesis}. Our main result, \Cref{thm:dec}, concerns a general-purpose e-test that conveniently decomposes into a sum of contributions, attributable to a causal network's mechanisms. 

\subsection{Basic notions from algorithmic information theory} 

In a very general sense, an observation $x$ may be deemed an outlier whenever it belongs to a low-probability set that is flagged by some computable test. This condition can be rephrased in terms of the length of a compressed description of $x$.

To formalize description lengths, we fix a universal prefix-free Turing machine, and discretize our variables so that their values can be written as strings. The conditional \emph{Kolmogorov complexity} $K(x\mid y)$ is the bit length of the shortest program \( p \) that outputs \( x \) when given access to another string $y$  \citep{Vitanyi19}. It satisfies the Kraft inequality \( \sum_x 2^{-K(x \mid y)} < 1 \). When $y$ is the empty string, we simply write $K(x)$. Just as Shannon's entropy measures information content for a probability distribution, $K(x)$ measures it for an individual sample $x$.

Using a standard prefix-free encoding of \( n \)-tuples, \citet{Vitanyi19} also define the joint Kolmogorov complexity \( K(x_1, \dots, x_n) \). By analogy with Shannon's mutual information, they define the \emph{algorithmic mutual information between} $x$ and $y$, conditional on $z^*$, by
\begin{align*}
I(x:y\mid z^*) 
&:= K(x\mid z^*) + K(y\mid z^*) - K(x,y\mid z^*) \\
&\ceq K(x\mid z^*) - K(x\mid (y,z)^*).
\end{align*}
Here, $z^*$ denotes a shortest program that outputs $z$, and $\ceq$ denotes equality up to a constant dependent on the universal machine, but not on $x$ or $y$\footnote{Note that $x^*$ contains more information than $x$, because $x$ is easily generated from $x^*$ but not vice versa. Conditioning on $x$ instead of $x^*$ would result in error terms that are often constant, and at worst logarithmic in the length of $x$ \citep{gacs2021lecture}.}. We say $x$ and $y$ are \emph{conditionally independent} given $z^*$, abbreviated as $x\independent y\mid z^*$, if $I(x:y\mid z^*)\ceq 0$.

\subsection{Universal tests}

Semicomputable p- and e-tests are called Martin-Löf and Levin tests, respectively. Each class of tests can be aggregated, resulting in universal tests that effectively combine every conceivable anomaly scoring algorithm, in order to detect the broadest possible variety of outliers. \cref{sec:constructuniversal} in the supplementary material contains more details; here we give a brief overview.

\begin{definition}[Domination property]
For two statistical tests $\lambda_1$ and $\lambda_2$ expressed in log form, we say $\lambda_1$ \emph{dominates} $\lambda_2$ if there exists a constant $c \in \mathbb{R}$ such that for all observations $x$ in the sample space,
\begin{align*}
    \lambda_1(x) \geq \lambda_2(x) - c.
\end{align*}
Intuitively, this means $\lambda_1$ can detect at least all the anomalies that $\lambda_2$ can detect, up to a constant term.
\end{definition}

The domination property provides a natural way to compare the power of different statistical tests. Since \Cref{def:tests} restricts us from increasing a test everywhere, it is perhaps surprising that the class of semicomputable tests contains a \emph{universal} test that dominates all the others.

\begin{theorem}[Universality of randomness deficiency]
Let $\cX$ be a discrete space that can be interpreted as a subset of $\{0,1\}^*$ in a canonical way, and $P$
be a computable probability distribution on $\cX$ (i.e., with finite description length). 
Then, the randomness deficiency of $x\in\cX$, defined by
\begin{equation}
\label{eq:universalfirst}
\delta (x) := -\log P(x) - K(x\mid P^*),
\end{equation}
is a universal Levin test, dominating all other Levin tests (i.e., semicomputable e-tests).
\end{theorem}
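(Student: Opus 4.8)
The statement asserts two things: (1) $\delta$ is itself a Levin test, i.e.\ a semicomputable e-test; and (2) $\delta$ dominates every other Levin test. The plan is to handle these in order, since (2) will actually give a cleaner route to part of (1).

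\paragraph{Step 1: $2^\delta$ is an e-test.}
Unwinding the definition, $2^{\delta(x)} = \frac{2^{-K(x\mid P^*)}}{P(x)}$, so
\[
\E_{X\sim P}\bigl[2^{\delta(X)}\bigr] = \sum_{x} P(x)\cdot\frac{2^{-K(x\mid P^*)}}{P(x)} = \sum_x 2^{-K(x\mid P^*)} \le 1,
\]
where the last inequality is the Kraft inequality for the prefix-free machine (conditioned on $P^*$). So $\delta$ is an e-test in log form. One has to be slightly careful about the support of $P$: if $P(x)=0$ then $\delta(x)=+\infty$ whenever $K(x\mid P^*)<\infty$, but such $x$ contribute $0$ to the expectation, so the bound is unaffected — this is exactly the behavior we want from an e-test that can flag probability-zero events.

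\paragraph{Step 2: semicomputability.}
$K(x\mid P^*)$ is upper semicomputable (run all programs in parallel, lower the current estimate whenever a shorter one halts), so $-K(x\mid P^*)$ is lower semicomputable; and $-\log P(x)$ is computable since $P$ is a computable distribution. Hence $\delta$ is lower semicomputable, and $2^\delta$ is a lower-semicomputable e-test, i.e.\ a Levin test.

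\paragraph{Step 3: domination.}
Let $\lambda$ be any Levin test, so $t:=2^\lambda$ is a lower-semicomputable function with $\E_{X\sim P}[t(X)]\le 1$. Then $x\mapsto P(x)\,t(x)$ is a lower-semicomputable sub-probability measure (it sums to at most $1$) which, moreover, can be computed from a description of $P$ together with the program enumerating $t$ from below. By the standard coding theorem / minimality property of $K$ relative to the reference machine — namely that for any lower-semicomputable sub-probability measure $\mu$ (semi-)computable from $P^*$ one has $K(x\mid P^*) \cle -\log \mu(x)$ — we get
\[
K(x\mid P^*) \;\cle\; -\log\bigl(P(x)\,t(x)\bigr) \;=\; -\log P(x) - \lambda(x),
\]
with the additive constant depending on $\lambda$ (through its description length) and the reference machine, but not on $x$. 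Rearranging gives $\lambda(x) \cle -\log P(x) - K(x\mid P^*) = \delta(x)$, which is precisely domination.

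\paragraph{Main obstacle.}
The crux is Step 3, and specifically invoking the coding theorem in the correct \emph{conditional} form: $K(x\mid P^*)$, not $K(x)$, is bounded by the negative log of the semimeasure, and this requires that the semimeasure $P(x)t(x)$ be enumerable-from-below uniformly given $P^*$. This is where conditioning on the shortest program $P^*$ (rather than on $P$ itself, or unconditionally) is essential: it supplies both the values $P(x)$ needed to form the product and the finite advice needed to run the enumeration. I would state the conditional coding theorem as a cited lemma (it is standard, cf.\ the references on algorithmic information theory in the excerpt) rather than reprove it, and spend the care instead on checking that $P(x)t(x)$ genuinely is lower semicomputable relative to $P^*$ and genuinely sums to $\le 1$. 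A secondary subtlety worth a remark: the constant in the domination bound depends on $\lambda$, which is unavoidable and is the log-form analogue of the universal-test multiplicative constant; this should be flagged so the reader does not expect a uniform constant.
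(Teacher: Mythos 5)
Your proof is correct and follows essentially the same route as the paper's: the e-test property comes from the Kraft inequality $\sum_x 2^{-K(x\mid P^*)}\le 1$, and domination comes from the conditional coding theorem, i.e.\ the fact that $2^{-K(\cdot\mid P^*)}$ multiplicatively dominates every sub-probability measure lower semicomputable given $P^*$ (here $P(x)\,2^{\lambda(x)}$). The only difference is packaging: the paper builds $\delta$ as a universal mixture $\sum_i 2^{-K(i\mid P^*)}Q_i(x)/P(x)$ over an enumeration of all lower semicomputable semimeasures and identifies the mixture with $m(x\mid P^*)$ via Li--Vit\'anyi's Theorem 4.3.3, whereas you verify domination pointwise for an arbitrary Levin test --- the same theorem in its other guise.
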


The intuition behind \cref{eq:universalfirst} is that typical samples from a distribution $P$ are optimally compressed by encodings of length $-\log P(x)$. When a sample $x$ can be compressed beyond this theoretical limit, we consider it anomalous.
This includes the case where we observe $x=0$ from a discretized centered Gaussian distribution. Despite being the mode of the distribution, $0$ is considered anomalous because its negative log-likelihood $-\log P(x)$ is substantially larger than its Kolmogorov complexity $K(0)\ceq 0$.

\begin{example}[Uniform distribution]
When $P$ is uniform over all $2^d$ binary strings of length $d$, \labelcref{eq:universalfirst} becomes $\delta(x) = d - K(x\mid P^*)\ceq d - K(x\mid d)$, measuring the degree to which $x$ is compressible.
\end{example}

When a distributional estimate $\hat{P}$ is inferred from data, \citep{Bishop1993} uses the ``reconstruction loss'' $-\log \hat{P}(x)$ as an outlier score. Without further considerations, this score is not calibrated, because there may be a huge region with low density, with high total probability. However, if the term $K(x\mid P^*)$ is also small, then we can say that $x$ is special within this huge set.\footnote{Note that conformal p-values \citep{Bates2021} do not require any parametric assumptions or density estimation, since their coverage guarantees rely on exchangeability alone. However, statements on {\it conditional} coverage \citep{Barber2019} are too weak for our purpose of calibrating {\it conditional} outlier scores.} 

\subsection{Algorithmic Markov condition} 

In order to identify root causes of an anomaly, we would like to decompose the randomness deficiency into a sum of contributions from each causal mechanism in a Bayesian network. Our framework for doing so requires an algorithmic analogue of the causal Markov condition. Specifically, \cite{Algorithmic} propose an adaptation of \eqref{eq:markov} that characterizes algorithmic dependencies between {\it individual objects}, rather than statistical dependencies between {\it random variables}:
\begin{postulate}[Algorithmic Markov condition]
\label{algmarkov}
Let $x_1,\dots,x_n$ be binary words describing objects whose causal relations are given by the DAG $G$. Then, the joint complexity of $(x_1,\dots,x_n)$ decomposes as
\[
K(x_1,\dots,x_n) \ceq \sum_{j=1}^n K(x_j\mid\pa^*_j).
\]
Equivalently, for any three sets $R,S,T$ of nodes such that $R$ d-separates $S$ and $T$, we have
\[S\independent T\mid R^*.\]
\end{postulate}

\cite{Algorithmic} argue that,
within a causal Bayesian network whose nodes represent random variables, 
the mechanisms $P_{X_j\mid \PA_j}$ 
constitute independent information-bearing objects.
Given $m$ observations of $n$-tuples $(x^1_1,\dots,x_n^1), \dots, (x^m_1,\dots,x_n^m)$, they 
define a DAG $G^m$ on $n\times (m+1)$ nodes, corresponding to the $n$ mechanisms $P_{X_j\mid \PA_j}$ and $n\times m$ observations $x_j^l$. The edges of $G^m$ consist of assigning the observations $\pa^l_j$ and the \emph{node} $P_{X_j\mid \PA_j}$ as parents of $x_j^l$. \cref{fig:x--y} shows $G^m$, when $G$ is a cause-effect pair $X\to Y$. 

For our purposes, it suffices to consider $G^1$, containing only one observation $x_j$ for each node $X_j$ in $G$. Accordingly, \( G^1 \) is constructed from \( G \) as follows: replace each variable \( X_j \) with its observation \( x_j \), and add \( P_{X_j \mid  \PA_j} \) as a parent of \( X_j \). Each \( P_{X_j \mid  \PA_j} \) becomes a root node, and these are the only root nodes in \( G^1 \).

When the mechanisms \( P_{X_j \mid  \PA_j} \) appear as root nodes, \Cref{algmarkov} implies their algorithmic independence from one another; \cite{Algorithmic} refer to this as \emph{Independence of Mechanisms}. More generally, we can allow mechanisms to share information by adding a new root node as their shared parent. In either case, the algorithmic Markov condition can be interpreted as saying that multiple mechanisms cannot fail in a correlated manner. Instead, correlated anomalies must be attributable to a common root cause. To derive a formal statement, we apply \Cref{algmarkov} to $G^1$:
\begin{lemma}[Conditional irrelevance of other mechanisms and predecessors when parents are given]
Let $X_1,\dots,X_n$ be causally ordered. 
Given its parents $\pa_j$ and the mechanism
$P_{X_j\mid\PA_j}$, none of the 
other mechanisms $(P_{X_i\mid\PA_i})_{i\neq j}$ 
and none of the causal 
predecessors $(x_i)_{i<j}$ enable further compression of $x_j$. That is,
\label{lm:indepinputs}
\begin{align*}
x_j \independent 
(x_i)_{i<j}, (P_{X_i\mid\PA_i})_{i\ne j}\mid(\pa_j,
P_{X_j\mid\PA_j})^*.
\end{align*}
\end{lemma}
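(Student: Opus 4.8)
I would prove this as a direct instance of \Cref{algmarkov} applied to the auxiliary DAG $G^1$. Recall that $G^1$ is obtained from $G$ by renaming each node $X_i$ to its observation $x_i$ and adding the mechanism $P_{X_i\mid\PA_i}$ as an extra parent of $x_i$, with the mechanisms as the only root nodes. Hence the parent set of $x_j$ in $G^1$ is exactly
\[
R:=\{x_i:i\in\PA_j\}\cup\{P_{X_j\mid\PA_j}\}=(\pa_j,\,P_{X_j\mid\PA_j}),
\]
the very object we condition on. Let $T:=\{(x_i)_{i<j}\}\cup\{(P_{X_i\mid\PA_i})_{i\ne j}\}$ be the target set; note $T\cap R=\pa_j$. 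Because $X_1,\dots,X_n$ is a causal order, every $G$-descendant of $X_j$ has index $>j$, so no $x_i$ with $i<j$ is a $G^1$-descendant of $x_j$; and the mechanism nodes, being roots, are descendants of nothing. Thus $T\subseteq\nd_{G^1}(x_j)$.

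Next I would invoke d-separation. By the (local) directed Markov property, the parents $R$ of $x_j$ d-separate $x_j$ from every non-descendant, so in particular $R$ d-separates $\{x_j\}$ from $T\setminus\pa_j$ (a set disjoint from both $\{x_j\}$ and $R$). Concretely, any path out of $x_j$ either starts toward a parent in $R$ — a non-collider lying in the conditioning set, which blocks it — or starts toward a child, in which case it stays among descendants of $x_j$ until it first meets a collider, and that collider is then a descendant of $x_j$ with no descendant in $R$, hence inactive; in particular the node $P_{X_j\mid\PA_j}$, having $x_j$ as its only neighbor, can only be a path endpoint and is harmless. The d-separation form of \Cref{algmarkov} therefore gives $x_j \independent (T\setminus\pa_j)\mid R^*$. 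Finally, since $\pa_j$ is a sub-tuple of $R$, the tuples $(T,R)$ and $(T\setminus\pa_j,R)$ carry the same information up to $O(1)$, so $K(x_j\mid (T,R)^*)\ceq K(x_j\mid (T\setminus\pa_j,R)^*)$ and hence $I(x_j:T\mid R^*)\ceq I(x_j:(T\setminus\pa_j)\mid R^*)\ceq 0$, which is the asserted $x_j \independent (x_i)_{i<j},\,(P_{X_i\mid\PA_i})_{i\ne j}\mid(\pa_j,P_{X_j\mid\PA_j})^*$.

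\textbf{Anticipated obstacle.}
The only genuinely delicate point is the bookkeeping around the overlap $T\cap R=\pa_j$: d-separation is stated for disjoint sets, so one cannot feed $(\{x_j\},T,R)$ to \Cref{algmarkov} directly; instead one separates $x_j$ from $T\setminus\pa_j$ and then argues that re-adjoining the already-conditioned $\pa_j$ costs no information. Everything else — checking $T\subseteq\nd_{G^1}(x_j)$, and that a node is d-separated from its non-descendants by its parents — is routine once $G^1$ is written down. (Applying \Cref{algmarkov} also tacitly uses the discretization convention fixed earlier, under which the observations $x_i$ and the mechanisms $P_{X_i\mid\PA_i}$ are represented as binary words.)
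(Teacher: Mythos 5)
Your proposal is correct and follows exactly the route the paper takes: the paper derives \Cref{lm:indepinputs} simply by applying the d-separation form of \Cref{algmarkov} to the augmented graph $G^1$, in which $(\pa_j, P_{X_j\mid\PA_j})$ is the parent set of $x_j$ and the listed predecessors and other mechanisms are all non-descendants. Your additional bookkeeping around the overlap $T\cap R=\pa_j$ is a careful filling-in of a detail the paper leaves implicit, not a departure from its argument.
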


\section{Decomposition of Randomness Deficiency}
\label{sec:dec}

We start by presenting the decomposition of randomness deficiency in the bivariate case. 
Specifically, consider the DAG $X \to Y$ between a cause $X$ and its effect $Y$. Extending \cref{eq:universalfirst}
to joint and conditional distributions, define the joint randomness deficiency of outcomes $(x,y)$ with respect to $P_{X,Y}$ by
\begin{align}
\label{eq:deltaxy}
    \delta(x, y)
    \defeq
    - \log P(x, y) - K(x, y \mid (P_{X, Y})^*),
\end{align}
and the conditional randomness deficiency by
\begin{align}
    \label{eq:deltaymidx}
    \delta (y \mid x)
    \defeq 
    -\log P (y \mid x) - K(y \mid (x, P_{Y \mid X})^*).
\end{align}
\begin{restatable}[Decomposition of randomness deficiency for a cause-effect pair]{lemma}{decompositionpair}
\label{lm:deltaxy}
For any two random variables $X \to Y$ (i.e., $X$ being the cause of $Y$), and for individual observations $x$ and $y$,  
the following equality holds under \cref{algmarkov} for the graph $G^1$ in \cref{fig:x--y}:
\begin{align*}
    \delta (x, y) \ceq \delta (x) + \delta (y \mid x) 
\end{align*}
\end{restatable}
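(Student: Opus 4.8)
The plan is to split the claimed identity into its likelihood part and its complexity part, handle each separately, and then recombine. The likelihood part is exact: by the definition of conditional probability, $-\log P(x,y) = -\log P_X(x) - \log P_{Y\mid X}(y\mid x)$, with no error term. So the lemma reduces to the complexity identity
\[
K\big(x,y \mid (P_{X,Y})^*\big) \;\ceq\; K\big(x\mid (P_X)^*\big) + K\big(y \mid (x, P_{Y\mid X})^*\big).
\]

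First I would observe that, because $P_{X,Y}$ is a computable distribution on a discrete space, the objects $P_{X,Y}$, $P_X$, and $P_{Y\mid X}$ are mutually computable (marginalize and divide one way, multiply the other), so conditioning on $(P_{X,Y})^*$ may be replaced by conditioning on the pair $(P_X, P_{Y\mid X})^*$ up to an additive constant. Then I would apply the symmetric chain rule for Kolmogorov complexity, carrying $(P_X, P_{Y\mid X})^*$ as a common side-string throughout:
\[
K\big(x, y \mid (P_X, P_{Y\mid X})^*\big) \;\ceq\; K\big(x\mid (P_X, P_{Y\mid X})^*\big) + K\big(y\mid (x, P_X, P_{Y\mid X})^*\big).
\]
Next I would erase the irrelevant conditioning objects on the right-hand side using \Cref{lm:indepinputs} applied to $G^1$ (\cref{fig:x--y}). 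Instantiated at the root node $X$, whose mechanism is $P_X$, the lemma gives $x \independent P_{Y\mid X}\mid (P_X)^*$, hence $K(x\mid (P_X, P_{Y\mid X})^*)\ceq K(x\mid (P_X)^*)$; instantiated at $Y$, with parent $x$ and mechanism $P_{Y\mid X}$, it gives $y\independent P_X \mid (x, P_{Y\mid X})^*$, hence $K(y\mid (x, P_X, P_{Y\mid X})^*)\ceq K(y\mid (x, P_{Y\mid X})^*)$. Substituting these two equalities and recombining with the likelihood split then yields $\delta(x,y)\ceq \delta(x)+\delta(y\mid x)$.

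The genuinely causal content here, namely that the marginal mechanism and the conditional mechanism cannot fail in a coordinated way, is already packaged into \Cref{lm:indepinputs}, so what remains is algorithmic-information bookkeeping. The part that needs care is the handling of the starred conditioning: justifying the interchange of $(P_{X,Y})^*$ with $(P_X, P_{Y\mid X})^*$ (which relies on finite description length of the mechanisms), and invoking the conditional chain rule for $K$ with a fixed starred side-string while checking that every additive constant is uniform in $x$ and $y$. These steps are standard; see \citet{Vitanyi19} and \citet{gacs2021lecture} for the properties of $K$ invoked here. They nonetheless deserve to be spelled out because the statement is an equality $\ceq$ rather than merely an inequality.
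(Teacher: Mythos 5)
Your proposal is correct and follows essentially the same route as the paper's proof: both reduce the claim to the complexity identity via the exact chain rule for $-\log P$, interchange $(P_{X,Y})^*$ with $(P_X,P_{Y\mid X})^*$, apply the chain rule for $K$, and then discharge the irrelevant conditioning objects using exactly the two instances of \cref{lm:indepinputs}, namely $x \independent P_{Y\mid X}\mid (P_X)^*$ and $y\independent P_X\mid (x,P_{Y\mid X})^*$. Your remarks on the uniformity of the additive constants and the mutual computability of the three mechanism objects are sound and slightly more explicit than the paper's treatment, but do not constitute a different argument.
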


\noindent The proof relies on \cref{lm:indepinputs}, and is provided in \cref{sec:decomp} of the supplementary. The following example shows why the algorithmic Markov condition is essential in order for the randomness deficiency to be additive.

\begin{figure}
    \centering
\begin{tikzpicture}[
    node distance=0.4cm and 0.5cm,
    every node/.style={draw, circle, minimum size=0.5cm},
    every edge/.style={->, thick}
]
\node (x2) [inner sep=4.5pt] {$x^2$};
\node (S) [left=of x2] {$P_X$};
\node (x1) [above=of x2, inner sep=4.5pt] {$x^1$};
\node (xm) [below=of x2] {$x^m$};

\node (y1) [right=of x1, inner sep=4.5pt] {$y^1$};
\node (y2) [right=of x2, inner sep=4.5pt] {$y^2$};
\node (ym) [right=of xm] {$y^m$};

\node (M) [right=of y2, minimum size=0.4cm, inner sep=1pt] {\small $P_{Y\mid X}$};

\draw[->] (S) -- (x1);
\draw[->] (S) -- (x2);
\draw[->] (S) -- (xm);
\draw[draw=none] (x2) --node[midway, draw=none, fill=none] {$\dots$} (xm);

\draw[->] (x1) -- (y1);
\draw[->] (x2) -- (y2);
\draw[->] (xm) -- (ym);

\draw[->] (M) -- (y1);
\draw[->] (M) -- (y2);
\draw[draw=none] (y2) -- node[midway, draw=none, fill=none] {$\dots$} (ym);
\draw[->] (M) -- (ym);
\end{tikzpicture}
\caption{The graph $G^m$, derived from $G$ of the form $X\to Y$, for $x^1, \dots, x^m$ sampled from $P_X$ and $y^1, \dots, y^m$ sampled from  $P_{Y \mid X}$.}
\label{fig:x--y}
\end{figure}

\begin{example}[No additivity without the algorithmic Markov condition]\label{ex:icm}
Let $\cX=\cY=\{0,1\}^d$
and $P_X$ be the uniform distribution. Furthermore, let $P_{Y\mid X}$ be the deterministic mechanism
\[
P(y\mid x) = 1 \quad \hbox{ iff } \quad 
y= x\oplus x_0,
\]
where $\oplus$ denotes bitwise XOR and $x_0$
is an algorithmically random string with $K(x_0) \ceq d$. Since $P_{Y\mid X}$ is deterministic, we always have $\delta(y\mid x)\ceq 0$.

Consider the input $x=x_0$, which violates the conditional independence relation $x \independent P_{Y\mid X} \mid (P_X)^*$. 
Then, $\delta(x)\ceq 0$ because $x$ is random with respect to $P_X$, but $\delta(x,y)\ceq d\gg 0$ because $(x,y)=(x_0,0^d)$ is a simple function of $x_0$, which is easily deciphered from $P_{X,Y}$. Since $x$ is non-generic relative to $P_{Y\mid X}$, we find that the randomness deficiency of $(x,y)$ cannot be attributed to $x$, nor to the mechanism generating $y$ from $x$. 

\end{example}

The multivariate generalization of \cref{lm:deltaxy} is this paper's main result:

\begin{restatable}[Decomposition of joint randomness deficiency]{theorem}{decomposition}
\label{thm:dec}
Let the set of strings $x_1, x_2, \dots, x_n$ be causally connected by a directed acyclic graph $G$, so that the algorithmic Markov condition holds for $G^1$. Then, the joint randomness deficiency of $x_1, x_2, \ldots, x_n$ decomposes into a sum of conditional randomness deficiencies of the mechanisms:
\[
    \delta(x_1,\dots,x_n) \ceq \sum_{j=1}^n \delta (x_j\mid\pa_j),
\]
where $\delta (x_j\mid\pa_j)$ denotes the randomness deficiency of $x_j$ given its parents.
\end{restatable}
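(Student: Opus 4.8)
The plan is to reduce the multivariate statement to repeated application of the bivariate case (\Cref{lm:deltaxy}) via induction on the causal order, and then invoke \Cref{lm:indepinputs} to handle the extra conditioning that arises. Concretely, fix a topological ordering $X_1, \dots, X_n$ consistent with $G$, and let $x_{<j}$ abbreviate $(x_1,\dots,x_{j-1})$. I would first establish the chain-rule-style identity
\[
\delta(x_1,\dots,x_n) \ceq \sum_{j=1}^n \delta(x_j \mid x_{<j}),
\]
where $\delta(x_j \mid x_{<j}) \defeq -\log P(x_j \mid x_{<j}) - K(x_j \mid (x_{<j}, P_{X_j \mid X_{<j}})^*)$. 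The $-\log$ part is just the probabilistic chain rule $-\log P(x_1,\dots,x_n) = \sum_j -\log P(x_j\mid x_{<j})$, which is exact. The Kolmogorov-complexity part, $K(x_1,\dots,x_n) \ceq \sum_j K(x_j \mid (x_{<j}, P_{X_j\mid X_{<j}})^*)$, follows from the algorithmic chain rule $K(x_1,\dots,x_n)\ceq \sum_j K(x_j\mid x_{<j}^*)$ together with the observation that, given $x_{<j}^*$, the program $P_{X_j\mid X_{<j}}$ is computable from (the description of) the full model $P_{X_1,\dots,X_n}$, which is itself a fixed finite object and hence costs only $O(1)$; so conditioning additionally on $P_{X_j\mid X_{<j}}^*$ changes $K(x_j\mid \cdot)$ by at most a constant. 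This step is essentially bookkeeping analogous to the proof of \Cref{lm:deltaxy}.

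The substantive step is to upgrade each term $\delta(x_j\mid x_{<j})$ to $\delta(x_j\mid\pa_j)$, i.e.\ to show
\[
\delta(x_j \mid x_{<j}) \ceq \delta(x_j \mid \pa_j).
\]
For the $-\log$ part this is immediate from the (statistical) Markov condition \eqref{eq:markov}: $P(x_j\mid x_{<j}) = P(x_j\mid\pa_j)$ since $\PA_j\subseteq\{X_1,\dots,X_{j-1}\}$ and the remaining predecessors are non-descendants of $X_j$. For the $K$ part I need $K(x_j\mid (x_{<j}, P_{X_j\mid X_{<j}})^*) \ceq K(x_j\mid (\pa_j, P_{X_j\mid\PA_j})^*)$. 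This is exactly where \Cref{lm:indepinputs} enters: applied to $G^1$, it gives $x_j \independent (x_i)_{i<j}, (P_{X_i\mid\PA_i})_{i\ne j} \mid (\pa_j, P_{X_j\mid\PA_j})^*$, which means that once we condition on $\pa_j^*$ and $P_{X_j\mid\PA_j}^*$, the remaining predecessors $x_{<j}$ (and all other mechanisms) provide no further compression of $x_j$ up to an additive constant. Combined with the fact that $P_{X_j\mid X_{<j}}$ and $P_{X_j\mid\PA_j}$ are mutually computable given the parents (again by the Markov condition, so $O(1)$ each way), the two conditional complexities agree up to a constant. Summing over $j$ and combining with the chain-rule identity yields the claimed decomposition.

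The main obstacle I anticipate is being careful about which objects are conditioned on in starred ($\cdot^*$, shortest-program) form versus plain form, and making sure every ``computable from a fixed finite object, hence $O(1)$'' claim is genuinely uniform in $x_1,\dots,x_n$ — the additive constants must not secretly depend on the observations or on $n$. A secondary subtlety is verifying that \Cref{lm:indepinputs} is being applied to the correct node in the correct $G^1$, and that its conclusion (stated as algorithmic independence) does indeed translate into the one-sided compression statement $K(x_j\mid(\pa_j,P_{X_j\mid\PA_j})^*) \cle K(x_j\mid(x_{<j},(P_{X_i\mid\PA_i})_i)^*) $ needed here, via the identity $I(x:y\mid z^*)\ceq K(x\mid z^*) - K(x\mid(y,z)^*)$ from the excerpt; the reverse inequality is free since conditioning on more can only decrease complexity up to a constant. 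Once these accounting issues are handled, the argument is a clean induction, and indeed for $n=2$ it collapses to \Cref{lm:deltaxy}.
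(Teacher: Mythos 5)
Your proposal follows essentially the same route as the paper: an induction along a topological ordering that peels off the last node via the bivariate decomposition (\cref{lm:deltaxy}), uses the statistical Markov condition for the $-\log P$ term, and uses \cref{lm:indepinputs} for the $K$ term to replace conditioning on all predecessors by conditioning on the parents alone. One justification in your first step is wrong as stated, however: you argue that conditioning additionally on $P_{X_j\mid X_{<j}}^*$ changes $K(x_j\mid\cdot)$ by at most a constant because the full model is ``a fixed finite object and hence costs only $O(1)$.'' If the mechanisms genuinely had $O(1)$ description length, the theorem would be trivial and the algorithmic Markov condition superfluous; \cref{ex:icm} shows precisely that a complex mechanism can compress an observation by an unbounded amount, so constants must not be allowed to absorb the complexity of $P$. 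The step must instead be justified by the algorithmic independence $x_j\independent(x_i)_{i<j},(P_{X_i\mid\PA_i})_{i\ne j}\mid(\pa_j,P_{X_j\mid\PA_j})^*$ from \cref{lm:indepinputs}, which you already invoke for the predecessors and which covers the other mechanisms as well; relatedly, since the joint deficiency is defined with conditioning on $(P_{X_1,\dots,X_n})^*$, the algorithmic chain rule should be applied relative to $P^*$ throughout rather than to the unconditional $K(x_1,\dots,x_n)$. With those repairs your argument coincides with the paper's proof, including the caveat (which you correctly flag, and which the paper handles by treating $n$ as a constant) that the additive error grows with the number of chained $\ceq$ steps.
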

The proof is by induction over the number $n$ of nodes; in order to chain the $\ceq$ equalities, we treat $n$ as a constant. See \cref{sec:decomp} in the supplementary for details, including a proof of the following corollary:

\begin{restatable}[Weak outliers do not cause stronger ones]{corollary}{monotonicity}
\label{thm:mon}
If there is a unique root cause
$j \in \{1,\dots,n\}$,
in the sense that 
\[
\delta(x_i\mid \pa_i) \ceq 0 \quad \hbox{ for } i\neq j,
\]
and the conditions of \cref{thm:dec} are met, then
\begin{equation}\label{eq:mon}
\delta(x_i)\cle  \delta (x_j\mid \pa_j) \quad \forall i\in \{1,\dots,n\}.
\end{equation}
\end{restatable}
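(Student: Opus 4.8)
The plan is to derive \eqref{eq:mon} from the decomposition in \Cref{thm:dec} together with one extra ingredient: that passing to a marginal can only decrease randomness deficiency. First I would invoke \Cref{thm:dec}, which under the stated hypotheses gives $\delta(x_1,\dots,x_n)\ceq\sum_{k=1}^n\delta(x_k\mid\pa_k)$. The unique-root-cause assumption, $\delta(x_k\mid\pa_k)\ceq 0$ for $k\neq j$, collapses the right-hand side, leaving $\delta(x_1,\dots,x_n)\ceq\delta(x_j\mid\pa_j)$; since $n$ is held fixed, the $n-1$ discarded terms contribute only a bounded additive constant, uniform in $i$. It then suffices to establish the \emph{unconditional} bound $\delta(x_i)\cle\delta(x_1,\dots,x_n)$ for each $i\in\{1,\dots,n\}$, because chaining it with the previous display yields $\delta(x_i)\cle\delta(x_j\mid\pa_j)$, which is exactly \eqref{eq:mon}.

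For the monotonicity step I would use conservation of randomness. Write $P\defeq P_{X_1,\dots,X_n}$, and let $\delta(x_i)$ be computed with respect to the marginal $P_{X_i}$. Consider the statistic $\Lambda$ on the joint space defined by $\Lambda(x_1,\dots,x_n)\defeq 2^{\delta(x_i)}$, which depends only on the $i$-th coordinate. By the universality theorem applied to $P_{X_i}$, the map $2^{\delta}$ is an e-test in ratio form for $P_{X_i}$, so $\E_{(X_1,\dots,X_n)\sim P}[\Lambda]=\E_{X_i\sim P_{X_i}}[2^{\delta(X_i)}]\le 1$; and $\Lambda$ is semicomputable because $P_{X_i}$ is obtained from the computable distribution $P$ by a fixed coordinate projection. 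Hence $\Lambda$ is a Levin test for $P$, and the universality theorem applied to $P$ gives $\delta(x_1,\dots,x_n)\cge\log\Lambda(x_1,\dots,x_n)=\delta(x_i)$, as required. An equivalent, more hands-on route is to compare the two deficiencies term by term via \eqref{eq:universalfirst}: marginalization gives $-\log P_{X_i}(x_i)\le-\log P(x_1,\dots,x_n)$, subadditivity of $K$ controls the complexity terms, and the leftover slack is absorbed by the fact that the conditional randomness deficiency $\delta(x_{-i}\mid x_i)$ is itself $\cge 0$ (which follows from the universality theorem, since the constant statistic $1$ is a Levin test).

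I expect the delicate point to be exactly the semicomputability claim in the monotonicity step: marginalizing a computable distribution over a discrete but possibly infinite fiber is only \emph{lower} semicomputable, so $-\log P_{X_i}$ need not be lower semicomputable and $\Lambda$ is not literally of the form \eqref{eq:universalfirst} for a computable distribution. When each $\cX_i$ is finite this is a non-issue; in general one has to run the conservation argument at the level of the universal lower-semicomputable semimeasure, pushing it forward through the projection, rather than plugging the marginal into \eqref{eq:universalfirst}. The remaining work — collapsing the sum and chaining the inequalities — is routine bookkeeping, modulo the usual at-most-logarithmic slack between conditioning on a string and on its shortest program, which is within the excerpt's stated conventions, and modulo keeping $n$ a constant so the additive errors do not accumulate.
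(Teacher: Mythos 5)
Your proof follows the same route as the paper: invoke \Cref{thm:dec} to collapse the joint deficiency to $\delta(x_j\mid\pa_j)$ under the unique-root-cause hypothesis, then chain with the fact that randomness deficiency is non-increasing under marginalization. The only difference is that the paper simply cites this last fact (Corollary 4.1.11 of G\'acs's lecture notes) whereas you supply the standard conservation-of-randomness argument for it, correctly flagging the semicomputability subtlety for marginals of computable distributions; both are fine.
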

On the bivariate graph $X\to Y$ whose unique root anomaly is $X$, \Cref{thm:mon} implies $\delta(y)\cle\delta(x)$. More generally, \eqref{eq:mon} states that no node has a randomness deficiency exceeding that of the root cause. 
In a nutshell, ``weak outliers cannot cause strong ones''. --
We have thus found an AIT version of \cref{lm:con}.\footnote{Hence, in the language of modern resource theories in physics (\citep{Coecke2019}, Definition 5.1), an anomaly is a ``resource'', and $\delta$ is a ``monotone'' measuring its value.}

It is worth noting that \citet{levin1984randomness} (see also \citet{gacs2021lecture}) demonstrated a similar principle, calling it \emph{randomness conservation}. It says that a mechanism's typical output cannot exhibit a substantially larger randomness deficiency than its input - under the condition that the mechanism is simple, i.e., has constant description length. In physics, randomness deficiency corresponds to a lack of entropy \citep{zurek1989algorithmic,gacs1994boltzmann}. Therefore, the second law of thermodynamics amounts to an instance of randomness conservation \citep{ebtekar2025foundations}. By conditioning the randomness deficiencies on the mechanisms, we obtained results that hold even when the mechanisms are not simple.

\section{Relation to computable anomaly scores}
\label{sec:ex}
In practice, we cannot directly use the randomness deficiency to detect outliers, because it is not computable. However, it is \emph{semicomputable}: by searching over descriptions of $x$, one can gradually improve the upper bound on its Kolmogorov complexity, thus obtaining approximations that monotonically approach the randomness deficiency in the limit. Since a generic search would be extremely slow, the search should focus on features or models suitable to the domain in question. We leave the development of good domain-specific anomaly search algorithms to future work.

Alternatively, the search can be performed manually without a computer algorithm. During statistical analysis, one can identify any computable feature, perhaps even \emph{after} observing $x$, and from it derive a lower bound on the randomness deficiency.

This section describes simple scenarios in which such bounds boil down to simple and well-known outlier scores. Thus, the randomness deficiency provides a practical recipe for obtaining and calibrating computable scores. Moreover, its theoretical properties motivate us to seek computable scores with the same desirable properties: {\it decomposition of joint scores into scores of the mechanisms}, and {\it monotonicity under marginalization}.
\begin{example}[z-score]
\label{ex:zscore}
For a Gaussian variable $X\sim {\mathcal N}(\mu,\sigma)$, the squared z-score reads
$
z^2(x) := (x-\mu)^2/\sigma^2.
$
At a fixed level of precision, $K(x\mid P^*)\cle 
2\log |x-\mu|$. 
Substituting the log likelihood
\begin{align*}
-\log P(x) &= \log\sqrt{2\pi}\sigma + \frac{\log e}{2} z^2(x),
\end{align*}
and treating $\sigma$ as a constant, yields
\begin{align*}
\delta (x) &\cge 
\frac{\log e}{2} z^2(x)  -
2 \log |x-\mu|.
\end{align*}
\end{example}

Note that while the well-known identity $\Exp[z^2]=1$ tells us that $z^2$ is an e-value, the universal e-value $2^{\delta}\approx e^{z^2/2}$ provides far stronger evidence at the tails.
Now we generalize to $n > 1$ dimensions:
\begin{example}[Decomposing squared Mahalanobis distance]\label{ex:maha}
The randomness deficiency of a random vector $\bx \in \R^n$, drawn from a centered Gaussian, satisfies
\[
\delta(\bx)
\cge \frac{\log e}{2} \bx^T \Sigma^{-1}_\bX \bx
- O(\log \|\bx\|_\infty),
\]
where $\Sigma_\bX$ denotes the covariance matrix
and $n$ is treated as a constant. 
The leading term is $(\log e) / 2$ times the squared Mahalanobis distance (M-distance, for short).

Let $\bX$ be generated by  
a causal Bayesian network with
linear structural equations  
$\bX = A \bX + {\bf N}$, where $A$ is strictly lower triangular and 
 $N_i$ are independent noise variables with variance $\sigma_i^2$. 
Using the transformation 
${\bf N} = (I-A) \bX$, we obtain a diagonal form
\[
\bx^T \Sigma^{-1}_\bX \bx = \sum_{i=1}^n \frac{n_i^2}{\sigma_i^2}.
\]
For large $n_i$, the expression $n_i^2/\sigma_2$
is roughly proportional to the randomness deficiency of the mechanism $P(X_i\mid PA_i)$. Hence, 
decomposition of randomness deficiencies translates asymptotically to decompositions of the squared M-distance (as used for multivariate anomaly detection \cite{Aggarwal2016}) into $z^2$-scores. 
Furthermore, M-distance 
is non-increasing with respect to 
marginalization to a subset of variables (see \cref{sec:mahalanobis} in the supplementary), resembling the monotonicity of 
randomness deficiency. 
\end{example}
\cref{ex:maha} supports \cref{pr:it}: a conservative bound on the randomness deficiency yields a computable e-value, as a function of the $z^2$-scores. Moreover, we verified using linear algebra that the $z^2$-score of any variable cannot exceed the sum of all the ``noise scores''. 
 
Note that the upper bound in \Cref{thm:mon} is a {\it conditional} randomness deficiency. The following example shows that a root cause may have a smaller {\it marginal} randomness deficiency than its downstream effects.
\begin{example}[Root cause with small marginal score]
With $N_i\sim {\mathcal N}(0,1)$, consider the three-node model
\begin{align*}
X_1 &:= N_1,
\\X_2 &:= 2 X_1 + N_2,
\\X_3 &:= X_1 - X_2 + N_3. 
\end{align*}
Then, $X_2\sim {\mathcal N}
(0,\sqrt{5})$.
Suppose $n_1,n_3$ take on typical values, while $n_2$ is anomalously large. Then, the marginal randomness deficiency $\delta(x_2)$ increases with $n_2^2/5$,
while the conditional randomness deficiency $\delta(x_2\mid x_1)$ increases with $n^2_2/1$ (where we've ignored the factor of $\frac{\log e}{2}$ from \Cref{ex:zscore}).
Since $X_3 \sim 
{\mathcal N}(0,\sqrt{3})$,
$\delta(x_3)$ increases with $n^2_2/3$. 
Thus, for large perturbations of $x_2$, we have $\delta(x_2)\ll\delta(x_3)\ll\delta(x_2\mid x_1)\approx\delta(x_1,x_2)$. Looking only at marginal scores, it would seem as though $x_2$ caused a stronger outlier $x_3$. Nonetheless, $\delta(x_3)\cle\delta(x_2\mid x_1)$ in agreement with \Cref{thm:mon}.
\end{example}

This paradox resembles Example 2.1 in \cite{Li2024}, where it is phrased in terms of $z^2$-scores. 
Following \cref{pr:it}, we can describe it entirely in terms of M-distances (which are just $z^2$-scores in the one-dimensional case):
the M-distance  of $x_3$ 
can be larger than the M-distance of its root cause $x_2$, but not larger than the M-distance of the pair $(x_1,x_2)$, i.e., the full cause of $x_3$. In other words, to fully quantify  
the anomaly introduced by perturbing $x_2$, we must also account for the destroyed coupling between $x_1$ and $x_2$, not only for the size of the value $x_2$ itself.

The insight from algorithmic information theory is that the anomaly scores of the vector $(x_1,x_2)$ and
the scalar $x_3$ are indeed comparable, and also comparable to the conditional score of $x_2$ given $x_1$, because their calibration was guided by the randomness deficiency. 

In the following example, which could be easily generalized to words over an arbitrary alphabet, 
randomness deficiency essentially boils down to relative entropy:
\begin{example}[$m$-bit binary word]
\label{ex:bin}
Let us first consider an $m$-bit word  whose bits are set to $1$ independently with probability $p$.
Counting the number of words $w$
with fixed Hamming weight $\ell$ yields the lower bound
\[
\delta(w) \cge - \ell \log p -
(m-\ell) \log (1-p) 
- \log {m \choose \ell}
- O(\log m).
\]
Using Stirling's approximation $\log m! = m \log m - m \log e + O(\log m)$,
one can show that
\begin{equation}\label{eq:kl}
\delta(w)
\cge m\cdot \kldiv{\frac \ell m}{p} - O(\log m), 
\end{equation} 
with the binary Kullback-Leibler distance
\[
\kldiv{q}{p} :=  
q\log\frac qp + (1-q) \log \frac{1-q}{1-p}.
\]
Here we see that words with
unexpectedly low or unexpectedly Hamming weight are both assigned high outlier scores, enabling us to compare both types of outliers. This works despite the probability mass function being monotonic in the Hamming weight $\ell$.\footnote{\cite{thomas2006elements}, in Section 11.2, call a word ``$\epsilon$-typical'' when it satisfies 
$\kldiv{\frac \ell m}{p} \leq \epsilon$. Their equation (11.67) finds that
such a word occurs with probability at least
$1- (m+1)^2 \cdot e^{-\epsilon m} $.} 
\end{example}
Many works \citep{Aggarwal2016,Akoglu2012,Noble2003} propose detecting anomalies by compression, but consider anomalies that have {\it higher} compression length than usual, seemingly in conflict with the randomness deficiency which flags {\it low} compression lengths. \cref{ex:bin} resolves this paradox:
for $p < 1/2$, any word with $K(w) > m \cdot H(p)$
must have Hamming weight larger than $m p$. Accordingly, its randomness deficiency is bounded from below by \eqref{eq:kl}. Hence, paradoxically, an unusually {\it high} compression length
also implies non-zero randomness deficiency, because it entails 
an increase of the log likelihood term that outweighs 
the observed increase in compression length.

More sophisticated notions of anomalies can be obtained, for instance, when $X$ is graph-valued and an anomaly is given by large cliques \citep{Aggarwal2016} -- e.g. when fraudsters work together frequently on illegal activities, their communication
 is densely connected. 
To estimate the
random deficiency of a graph $G$ with a clique of size $k$, it suffices to
define a probability distribution on the set of graphs with $m$ nodes, and bound a graph's description length by counting the number of graphs with cliques of size $k$.

\section{Experiment with Lempel-Ziv Compression}
\label{sec:toy}

So far we have accounted for the term $K(x\mid P^*)$ only by very rough upper bounds rather than trying to approximate it.
Here
we describe a toy scenario in which we can detect anomalies using the Lempel-Ziv compression algorithm; more difficult scenarios may demand more powerful compression algorithms that are closer to general AI.

Consider the causal DAG:
$
X_1 \to X_2 \to \cdots \to X_n,
$ 
with the structural equations
\begin{equation}
\label{eq:sem}
X_1 = N_1; \quad X_j = X_{j-1} + N_j \quad \text{for } j = 2,\dots,n.
\end{equation}
Moreover, suppose that every $N_j$ for $j=1,\dots,n$
is drawn from the uniform distribution on the set of numbers in $[0,1]$ discretized to $d\gg 1$ digits of precision. 
By choosing the uniform distribution, the terms $-\log P(x_j\mid x_{j-1})$ become constant; moreover, we have $K(x_j\mid (x_{j-1}, P)^*)\ceq K(x_j\mid x_{j-1})$.
Since the conditional distribution of each $X_j$, given its parent, is uniform over the (discretized) interval $[x_{j-1},\, x_{j-1}+1]$, the conditional randomness deficiency of 
$x_j$ reads
\[
\delta(x_j\mid x_{j-1}) \ceq d\cdot \log 10 - K(x_j\mid x_{j-1}).
\]

Suppose now we inject an anomaly at some node $j$ by
setting $n_j$ to some numbers in $\{0,\dots,0.9\}$. 
As a result, $x_j$ and $x_{j-1}$ now coincide in at most $2$ digits,
so that $K(x_j\mid x_{j-1}) \ceq 0$ and $\delta(x_j\mid x_{j-1}) \ceq d\cdot \log 10$. 

Following \cite{submodular}, we approximate 
$K(x_j\mid x_{j-1}^*) \ceq K(x_j, x_{j-1}) - K(x_{j-1})$ by 
$
R(x_j,x_{j-1}) - R(x_{j-1}),
$
where $R$ denotes the length of a compressed encoding using the Lempel-Ziv algorithm.
Whenever $n_j$ corrupts to a $1$-digit number, 
Lempel-Ziv recognizes that $x_j$ and $x_{j-1}$
coincide with respect to all but $1$ or $2$ digits. Thus,
$R(x_j,x_{j-1})\approx R(x_{j-1})$, which lets us infer the randomness deficiency of almost $d\log 10$. 

We conducted experiments to verify our findings for
$n=4$, with the noise uniformly drawn from
numbers in $[0,1]$ with $d=10$ digits.  
We randomly chose one of the $4$ nodes as ``root cause'' and set all but one digit of its noise variables to zero. To detect the root cause, we selected the label $j$ that minimized Lempel-Ziv compression length
and found the right one in $100$ out of $100$ runs. 

From a theoretical point of view, the example shows that the joint observation can be anomalous (here in the sense of showing two variables whose digits coincide largely), although none of the variables 
show unexpected behaviour with respect to their \emph{marginal} distribution. 
Hence, the root cause can only be found by inspecting which \emph{mechanism} behaves unexpectedly.

\section{Conclusions}
The algorithmic randomness deficiency offers a principled unified definition of outliers, without prior specification of the feature that exposes the anomaly. 
On a causal Bayesian network, we saw that the randomness deficiency of a joint observation decomposes along individual causal mechanisms, subject to the Independent Mechanisms Principle. This allows us to trace anomalous observations back to their root causes, identifying specific mechanisms responsible for the anomaly. Moreover, we showed that weak outliers cannot cause stronger outliers,
extending Levin’s law of randomness conservation.
These foundational insights can help
calibrate anomaly scores as well as inspire more versatile anomaly detection algorithms, supporting root cause analysis in complex systems. 

{\bf Acknowledgments:} DJ and YW would like to thank the authors of \cite{Li2024} for an interesting discussion about their Example 2.1.


\newpage

\onecolumn

\title{Toward Universal Laws of Outlier Propagation\\(Supplementary Material)}
\maketitle

{
    \def\thefootnote{*}\footnotetext{These authors contributed equally to this work.}
}

\section{p-tests and e-tests}
\label{sec:etestsandptests}

\cref{tab:testssummary} summarizes key differences between p-tests and e-tests. While p-tests are linked to controlling false positive rates and p-values, e-tests are closely associated with likelihood ratios and betting game interpretations.

\begin{table}[ht]
\centering
\setlength{\tabcolsep}{5pt} 
\renewcommand{\arraystretch}{1.2} 
\begin{tabular}{c|c|c}
\hline
      & p-tests (Martin-L\"of) & e-tests (Levin) \\ 
\hline
Intuition &
False positives / p-values &
Betting games / likelihood ratios \\
Defining property &
$\forall\epsilon>0,\,P\left( \Lambda(X) \ge 1/\epsilon \right) \le\epsilon$ &
$\E_{X\sim P}\left( \Lambda(X) \right) \le 1$ \\
Prototypical example &
$1/P(\tau(X) \ge \tau(x))$ &
$Q(x)/P(x)$ \\
Combination &
$\sup_i w_i\Lambda_i$ &
$\sum_i w_i\Lambda_i$ \\
Combination (log form) &
$\sup_i \{ \lambda_i + \log w_i \}$ &
$\log\left(\sum_i w_i 2^{\lambda_i}\right)$ \\
Universal test (log form) &
$\sup_i\{ \lambda_i(x) - K(i\mid P^*) \}$ &
$-\log P(x) - K(x\mid P^*)$ \\
\hline
\end{tabular}
\caption{Summary of types of test statistics (outlier scores)}
\label{tab:testssummary}
\end{table}

\paragraph{Converting between p-Tests from e-Tests}
By Markov's inequality, every e-test is also a p-test. Conversely, \citet{vovk2021values} describe a number of ways to \emph{calibrate} any given p-test into an e-test. A natural choice is the Ramdas calibration, which turns the p-test $\Lambda$ into the e-test
\begin{equation}
\label{eq:ramdascalibration}
\Lambda'(x) := \frac
{\Lambda(x)-\ln\Lambda(x)-1}
{\ln^2\Lambda(x)}.
\end{equation}

\begin{example}[One-tailed p-value]
For any feature statistic $\tau:\domain\rightarrow\R$, the one-tailed p-value is given by
\begin{equation}
\label{eq:pvaluetest}
\Lambda_\tau(x) := P(\tau(X) \ge \tau(x)).
\end{equation}
It is easily verified to be a p-test in probability form. The outlier score \labelcref{eq:itscore} is the same test as the p-value \labelcref{eq:pvaluetest}, but expressed in log form.
\end{example}

\begin{example}[Likelihood ratio]
For any alternative hypothesis described by a sub-probability distribution\footnote{The algorithmic information theory literature refers to sub-probability distributions as \emph{semimeasures}. They generalize probability measures by allowing their sum to be less than $1$.} $Q$, the likelihood ratio is given by
\begin{equation}
\label{eq:likelihoodratiotest}
\Lambda_Q(x) := \frac{Q(x)}{P(x)}.
\end{equation}
It is easily verified to be an e-test in ratio form; in fact, all e-tests can be written this way. By the Neyman-Pearson lemma \citep{neyman1933ix}, when $Q$ is a probability distribution, $\Lambda_Q$ is the optimal test to distinguish between $P$ and $Q$.
\end{example}

For the purposes of outlier detection, $\tau$ may be a feature for which a high value should be considered anomalous, while $Q$ might be a distribution that we expect would result from some kind of anomaly. In practice, we can design many tests, each specializing in a different kind of anomaly. The following lemma allows us to merge many such tests into one.
\begin{restatable}[Combination tests]{lemma}{combinationtests}
\label{lem:combinationtests}
Let $\Lambda_i$ (or $\lambda_i$) be a finite or countably infinite sequence of tests, with associated weights $w_i>0$ summing to at most $1$. Then:
\begin{itemize}[left=0pt]
   \item If $\Lambda_i$ are p-tests in prob. form, so is $\inf_i \frac{\Lambda_i}{w_i}$.
    \item If $\Lambda_i$ are p-tests in ratio form, so is $\sup_i w_i\Lambda_i$.
    \item If $\lambda_i$ are p-tests in log form, so is $\sup_i \{\lambda_i + \log w_i\}$.
    \item If $\Lambda_i$ are e-tests in prob. form, so is $\left(\sum_i\frac{w_i}{\Lambda_i}\right)^{-1}$.
    \item If $\Lambda_i$ are e-tests in ratio form, so is $\sum_i w_i\Lambda_i$.
   \item If $\lambda_i$ are e-tests in log form, so is $\log\left(\sum_i w_i 2^{\lambda_i}\right)$.
\end{itemize}
\end{restatable}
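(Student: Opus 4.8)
The plan is to prove the two ``ratio form'' statements directly, and then obtain all the ``probability form'' and ``log form'' statements for free by unwinding \Cref{def:tests}. As a preliminary, I would record that for a nonnegative statistic $\Lambda$ the ratio-form p-test condition $\forall\eps>0,\ P(\Lambda(X)\ge 1/\eps)\le\eps$ is equivalent to the same condition with strict inequality: the $\Rightarrow$ direction is trivial, and for $\Leftarrow$ one observes $\{\Lambda\ge 1/\eps\}\subseteq\{\Lambda>1/\eps'\}$ whenever $\eps'>\eps$ and lets $\eps'\downarrow\eps$. This lets me split tail events along a countable union without worrying about a supremum being attained only in the limit.

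For the ratio-form p-test claim ($\sup_i w_i\Lambda_i$), I would write $\{\sup_i w_i\Lambda_i(X) > 1/\eps\}=\bigcup_i\{\Lambda_i(X) > 1/(w_i\eps)\}$ and apply Boole's inequality: $P(\sup_i w_i\Lambda_i(X)>1/\eps)\le\sum_i P(\Lambda_i(X)>1/(w_i\eps))\le\sum_i w_i\eps\le\eps$, using the hypothesis on each $\Lambda_i$ (in strict form) and $\sum_i w_i\le 1$. For the ratio-form e-test claim ($\sum_i w_i\Lambda_i$), I would simply compute $\E_{X\sim P}\bigl[\sum_i w_i\Lambda_i(X)\bigr] = \sum_i w_i\,\E_{X\sim P}[\Lambda_i(X)] \le \sum_i w_i \le 1$, where the interchange of sum and expectation is justified by Tonelli's theorem since all terms are nonnegative.

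The remaining four bullets reduce to these two via the definitions of probability form and log form. For the probability-form p-test claim, $\bigl(\inf_i \Lambda_i/w_i\bigr)^{-1} = \sup_i w_i(1/\Lambda_i)$, and each $1/\Lambda_i$ is a ratio-form p-test by hypothesis, so the ratio-form p-test claim applies; likewise $\bigl(\sum_i w_i/\Lambda_i\bigr)^{-1}$ has reciprocal $\sum_i w_i(1/\Lambda_i)$ with each $1/\Lambda_i$ a ratio-form e-test, so the ratio-form e-test claim applies. For the log-form claims, $2^{\sup_i\{\lambda_i+\log w_i\}} = \sup_i w_i 2^{\lambda_i}$ and $2^{\log(\sum_i w_i 2^{\lambda_i})} = \sum_i w_i 2^{\lambda_i}$, with each $2^{\lambda_i}$ a ratio-form p-test (resp.\ e-test), so once more the corresponding ratio-form claim finishes the job.

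I do not expect a genuine obstacle here; the content is Boole's inequality plus linearity of expectation, and the rest is reparametrization. The only points requiring care are bookkeeping: the conventions $1/0=\infty$ and $1/\infty=0$ in $[0,\infty]$, the measurability of countable suprema, infima, and sums of the $\Lambda_i$, and the strict-versus-nonstrict inequality subtlety, which is precisely why I would isolate the equivalence above as a first step.
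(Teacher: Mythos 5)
Your proposal is correct and follows essentially the same route as the paper's proof: a union bound for the ratio-form p-test claim, linearity of expectation (justified by Tonelli) for the ratio-form e-test claim, and reparametrization to obtain the probability- and log-form variants. Your preliminary strict-inequality lemma is a welcome refinement: it handles the countably infinite case where $\sup_i w_i\Lambda_i(x)\ge 1/\eps$ need not imply that any single $\Lambda_i(x)\ge 1/(w_i\eps)$, a subtlety the paper's one-line union bound glosses over.
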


\begin{proof}
\cite{genovese2006false} state a similar result for finitely many independent p-tests, calling the combination test a \emph{weighted Bonferroni procedure}. \cite{vovk2021values} state a similar result for equally weighted e-tests. It is fairly straightforward to extend these works to our setting; for completeness, the proof is as follows. First, given some p-tests $\Lambda_i$ in ratio form, we verify \Cref{def:tests} for their combination $\sup_i w_i\Lambda_i$:
\begin{equation*}
P\left(\sup_i w_i\Lambda_i(X) \ge \frac 1\epsilon\right)
\le \sum_i P\left(\Lambda_i(X) \ge \frac 1{w_i\epsilon}\right)
\le \sum_i w_i\epsilon
\le \epsilon.
\end{equation*}
Next, given e-tests $\Lambda_i$ in ratio form, we verify \Cref{def:tests} for their combination $\sum_i w_i\Lambda_i$:
\begin{equation*}
\E\left(\sum_i w_i\Lambda_i(X)\right)
= \sum_i w_i \E(\Lambda_i(X))
\le \sum_i w_i
\le 1.
\end{equation*}
Transforming the tests into probability and log form yields the remaining results.
\end{proof}

\begin{example}[Two-tailed p-value]
By combining the one-tailed p-values \labelcref{eq:pvaluetest} for the feature statistics $\tau$ and $-\tau$, each with weight $0.5$, we obtain the two-tailed p-value
\begin{align*}
\Lambda_{\pm\tau}(x)
&:= \min\left\{\Lambda_\tau(x)/0.5,\;\Lambda_{-\tau}(x)/0.5\right)\}
\\&= 2\min\left\{
\Pr(\tau(X) \ge \tau(x)),\;
\Pr(\tau(X) \le \tau(x)) \right)\}.
\end{align*}
\end{example}

From now on, we express tests in log form except where stated otherwise. By \Cref{lem:combinationtests}, a combination test exceeds each of its component tests, up to the additive regularization term $\log w_i$ which does not depend on the sample $x$. Thus, by committing to a combination test prior to observing $x$, we effectively postpone the search over the component tests until after observing $x$.

Unfortunately, the regularization term does depend on $i$, and becomes arbitrarily large as the number of tests becomes large or infinite. If we want the combination test to be competitive with its most promising component tests, it becomes important to choose the weights well \citep{wasserman2006weighted}. The problem of choosing $w_i$ is analogous to that of choosing Bayesian priors, and is philosophically challenged by formal impossibility results in the theory of inductive inference \citep{adam2019no,wolpert2023implications}. Fortunately, computability poses a useful constraint on the set of permissible tests as well as priors \citep{rathmanner2011philosophical}. It is suggestive that every computable sequence of weights $w_i$ can be turned into a computable binary code with lengths $\lceil-\log w_i\rceil$ \citep{thomas2006elements}. Thus, minimizing the regularization penalty amounts to finding the shortest computable encoding for $i$.

\section{Constructing the universal e-test}
\label{sec:constructuniversal}

To construct a universal e-test, we first need a few concepts from computability theory. We say a function \( f: \{0,1\}^* \rightarrow \mathbb{R} \) is lower (or upper) semicomputable if it can be computably approximated from below (or above, respectively). For example, the Kolmogorov complexity $K$ is upper, but not lower, semicomputable: by running all programs in parallel, we gradually find shorter programs that output \( x \). We say $f$ is computable if it is both lower and upper and semicomputable.

We say a p-test or e-test is semicomputable, if it is lower semicomputable when expressed as a function in either ratio or log form. A semicomputable p-test is called a Martin-L\"of test \citep{martin1966definition}, while a semicomputable e-test is called a Levin test \citep{levin1976uniform}. Intuitively, semicomputable tests detect more anomalies as computation time increases. By restricting attention to semicomputable tests, it becomes possible to create universal combinations of them.

We return to the problem of optimizing the weights in \Cref{lem:combinationtests}: we would like to dominate not only each of the individual component tests, but also each of the combination tests obtainable by some computable sequence of weights $w_i$. Note that the constraints on $w_i$ amount to specifying a discrete sub-probability distribution. Among all lower semicomputable discrete sub-probability distributions,
\begin{equation}
\label{eq:univeralprobability}
m_i := 2^{-K(i\mid P^*)}
\end{equation}
is \emph{universal} in the sense that for all alternatives $w$ and all $i$,
\[\log m_i \cge \log w_i - K(w\mid P^*).\]
This also holds when $P$ is replaced by any other piece of prior knowledge.

Now, applying \Cref{lem:combinationtests} with the universal weights \labelcref{eq:univeralprobability}, to any sequence of p-tests $\lambda_i$ in log form, yields their universal combination
\begin{equation*}
\lambda(x)
:= \sup_i\{ \lambda_i(x) - K(i\mid P^*) \}.
\end{equation*}
In the case where $\{\lambda_i\}_{i=1}^\infty$ is a computable enumeration of some Martin-L\"of tests, $\lambda$ is itself a Martin-L\"of test. It follows that if $\{\lambda_i\}_{i=1}^\infty$ enumerates \emph{all} Martin-L\"of tests, then $\lambda$ is universal among them. Every program $p$ that outputs $i$ with access to $P$, determines the feature $\lambda_p:=\lambda_i$. Moreover, it satisfies $|p|\ge K(i\mid P^*)$, with equality for the shortest such program. Therefore, we can rewrite the universal Martin-L\"of test as
\begin{equation}
\label{eq:universalmartinlof}
\rho(x) := \sup_p\{ \lambda_p(x) - |p|\}.
\end{equation}

With e-tests, the situation is even nicer because \Cref{eq:likelihoodratiotest} provides a one-to-one correspondence between e-tests $\Lambda_Q$ and sub-probability distributions $Q$. Moreover, if we assume $P$ to be computable, an e-test for it is semicomputable (i.e., is a Levin test) iff its corresponding $Q$ is lower semicomputable. Applying \Cref{lem:combinationtests} with the universal weights \labelcref{eq:univeralprobability}, to the likelihood ratios $\Lambda_{Q_i}$, yields their universal combination
\begin{align*}
\Lambda(x)
:= \sum_i m_i\Lambda_{Q_i}(x)
= \frac{\sum_i m_i Q_i(x)}{P(x)}.
\end{align*}

In the case where $\{Q_i\}_{i=1}^\infty$ enumerates all lower semicomputable sub-probability measures, Theorem 4.3.3 in \cite{Vitanyi19} implies $\sum_i m_i Q_i(x) {\stackrel{\times}{=}} m(x\mid P^*)$, where ${\stackrel{\times}{=}}$ indicate equality up to a multiplicative term. Switching to log form, we obtain the universal Levin test
\begin{equation}
\label{eq:universallevin}
\delta(x)
:= \log\frac{m(x\mid P^*)}{P(x)}
= \log\frac{1}{P(x)} - K(x\mid P^*).
\end{equation}

By the Kraft inequality, it is an e-test in log form. Note that \labelcref{eq:universallevin} is the difference between a Shannon code length and a shortest program length for $x$. Intuitively, $\delta$ is high whenever the Shannon code derived from $P$ is inefficient at compressing $x$, indicating that $x$ possesses regularities that are atypical of $P$.

The algorithmic information theory literature uses the term \emph{randomness deficiency} to refer to either the universal Martin-L\"of test \labelcref{eq:universalmartinlof} or the universal Levin test \labelcref{eq:universallevin}. Keeping in mind conversions such as \labelcref{eq:ramdascalibration} between the two types of tests, we will develop our theory in terms of the latter.

\begin{example}[Feature Selection through Universal Tests]
Consider an infinite sequence of basis feature functions $(f_i)$, where $\log P$ is expressed as a linear combination of finitely many features: $\log P := \sum_i \alpha_i f_i$. When an observation $x$ exhibits atypical feature values under $P$, it would typically have a substantially higher likelihood under some modified linear combination $\log \tilde{P} := \sum_i \tilde{\alpha}_i f_i$.

If we assume the description of $P$ is given in terms of the coefficient vector $\alpha$, we can bound the Kolmogorov complexity:
\[
K(x\mid P^*)
\stackrel{+}{\leq} K(\tilde{\alpha}\mid\alpha) + \log\frac{1}{\tilde{P}(x)},
\]

This leads to a lower bound on the Levin test:
\[
\delta_P (x)
= - \log P(x)
- K(x\mid P^*)
\stackrel{+}{\geq} \log\frac{\tilde{P} (x)}{P(x)}
- K(\tilde{\alpha}\mid\alpha).
\]

The bound becomes large when the improvement in likelihood (first term) substantially exceeds the complexity cost $K(\tilde{\alpha}\mid\alpha)$ required to modify the coefficients. This demonstrates how the universal tests naturally detect feature-based anomalies: when certain feature statistics of $x$ are unusual under $P$, there exists an alternative distribution $\tilde{P}$ that better explains these feature values, leading to a high value of $\delta_P(x)$.
\end{example}

\section{Decomposition of randomness deficiency}
\label{sec:decomp}

\decompositionpair*
\begin{proof}
By \cref{eq:deltaxy,eq:deltaymidx}, we have:
\begin{align*}
\delta (x, y)
&\ceq - \log P(x,y) - K(x,y \mid (P_{X, Y})^*)
\\&\ceq - \log P(x,y) - K(x \mid (P_{X, Y})^*) - K(y \mid (x, P_{X, Y})^*),
\\\delta (x) + \delta(y \mid x)
&\ceq - \log P(x,y) - K(x \mid (P_{X})^*) - K(y \mid (x,P_{Y\mid X})^*).
\end{align*}
To complete the proof, it suffices to establish the following:
\[
\text{(1) } \quad K(x \mid (P_{X, Y})^*) \ceq K(x \mid (P_{X})^*) \qquad \text{and} \qquad \text{(2) } \quad  K(y \mid (x, P_{X, Y})^*) \ceq K(y \mid (x, P_{Y \mid X})^*)
\]
\paragraph{Proof of (1): } 
Applying \cref{lm:indepinputs}
to our bivariate case yields 
$x \independent P_{Y\mid X} \mid (P_X)^*$,
meaning that $P_{Y\mid X}$ becomes irrelevant when predicting $x$ from a shortest program for $P_X$. Hence,
\[K(x \mid (P_{X, Y})^*)
\ceq K(x \mid (P_{Y\mid X}, P_X)^*)
\ceq K(x \mid (P_X)^*).\]
\paragraph{Proof of (2): } 
Again applying \cref{lm:indepinputs}, $y \indep P_X \mid (x, P_{Y \mid X})^*$,
meaning that $P_{X}$ becomes irrelevant when predicting $y$ from a shortest program for $x$ and $P_{Y\mid X}$. Hence,
\[K(y \mid (x, P_{X, Y})^*)
\ceq K(y \mid (x, P_{Y \mid X}, P_X)^*)
\ceq K(y \mid (x, P_{Y \mid X})^*).\]
\end{proof}
\decomposition*

\begin{proof}
We prove this theorem by induction on $n$.

\noindent\textbf{Base case: }
When $n=1$, we have $\pa_1 = \emptyset$, so the claim is trivial.

\noindent\textbf{Inductive Hypothesis: }
Assume that for any sequence of $n - 1$ strings $x_1, x_2, \ldots, x_{n-1}$,
\[
    \delta(x_1,\dots,x_{n-1}) \ceq \sum_{j=1}^{n-1} \delta (x_j\mid \pa_j).
\]
\textbf{Inductive Step: }
Now we must prove the statement holds for $n$ strings. The causal Markov condition yields
\[P(x_n\mid x_1,\ldots,x_{n-1}) = P(x_n\mid \pa_n).\]
Meanwhile, the algorithmic Markov condition gives us \cref{lm:indepinputs}, which implies
\[
x_n \independent x_1,\ldots,x_{n-1}\mid(\pa_n, P_{X_n\mid \PA_n})^*.
\]
Together with $P_{X_n\mid X_1,\ldots,X_{n-1}} = P_{X_n\mid \PA_n}$, this yields
\[K(x_n \mid (x_1,\ldots,x_{n-1},P_{X_n\mid X_1,\ldots,X_{n-1}})^*)
\ceq K(x_n \mid (\pa_n, P_{X_n \mid \PA_n})^*).\]
Putting these results together,
\begin{align*}
\delta(x_n \mid x_1, \dots, x_{n-1})
&:= - \log P(x_n\mid x_1, \ldots, x_{n-1})
- K(x_n \mid (x_1,\ldots,x_{n-1},P_{X_n\mid X_1,\ldots,X_{n-1}})^*)
\\&\ceq - \log P(x_n\mid \pa_n)
- K(x_n \mid (\pa_n, P_{X_n \mid \PA_n})^*)
\\&= \delta(x_n\mid \pa_n).
\end{align*}
Finally, we combine the inductive hypothesis with \cref{lm:deltaxy}, substituting $x:=(x_1,\dots,x_{n-1})$ and $y:=x_n$:
\begin{align*}
\delta(x_1,\dots,x_n)
&\ceq \delta(x_1,\dots,x_{n-1}) + \delta(x_n \mid x_1, \dots, x_{n-1})
\\&\ceq \sum_{j=1}^n \delta (x_j\mid \pa_j).
\end{align*}
This completes the inductive step and the proof.
\end{proof}

\monotonicity*
\begin{proof}
Corollary 4.1.11 of \cite{gacs2021lecture} states that $\delta$ is non-increasing under marginalization. That is, for all $i$,
\[
\delta (x_i) \cle \delta(x_1,\dots,x_n).
\]
Decomposing the joint randomness deficiency of $x_1, \dots, x_n$ according to \cref{thm:dec},
\begin{align*}
\label{eq:dec}
\delta(x_1,\dots,x_n)
&\ceq \sum_{i=1}^n \delta (x_i\mid \pa_i)
\\&= \delta(x_j\mid \pa_j) + \sum_{i\neq j,\, i=1}^n \delta(x_i \mid \pa_i)
\\&\ceq \delta(x_j\mid \pa_j).
\end{align*}
Putting these results together yields
\begin{equation*}
\delta(x_i)\cle  \delta (x_j\mid \pa_j).
\end{equation*}
\end{proof}

\section{Non-increasingness of Mahalanobis distance}
\label{sec:mahalanobis}

Let $\bx \in \R^n$, and 
let $Q\bx$ denote the projection of $\bx$ onto the subspace spanned by the variables 
$X_{i_1},\dots,X_{i_k}$, where $k < n$. We express the covariance matrix 
$\Sigma_X$ in block matrix form as 
\[
\Sigma_X = \left(\begin{array}{cc} \Sigma_{11} & \Sigma_{12}\\
\Sigma_{21} & \Sigma_{22} \end{array} \right),
\]
with index $1$ referring to the variables $i_1,\dots,i_k$
and $2$ to the remaining variables. Using a known formula for 
inversion of $2\times 2$ block matrices (see
Proposition 3.9.7 in \cite{bernstein2009}), we obtain  
\[
\Sigma^{-1}_X = \left(\begin{array}{cc} \Sigma_{11}^{-1} + \Sigma_{11}^{-1} \Sigma_{12} A \Sigma_{21}\Sigma_{11}^{-1} &   -\Sigma_{11}^{-1} \Sigma_{12} A\\
- A \Sigma_{21} \Sigma_{11}^{-1} & A \end{array} \right),
\]
with $A:=(\Sigma_{22} - \Sigma_{21} \Sigma_{11}^{-1} \Sigma_{12})^{-1}$.
We now compute the difference between the squared Mahalanobis distances 
on $\R^n$ and $\R^k$:
\begin{equation}\label{eq:diffM}
\bx^T \Sigma_{X}^{-1} \bx - (Q\bx)^T \Sigma_{11}^{-1} Q\bx =
\bx^T C \bx, 
\end{equation}
with 
\[
C :=  \left(\begin{array}{cc}  \Sigma_{11}^{-1} \Sigma_{12} A \Sigma_{21}\Sigma_{11}^{-1} &   -\Sigma_{11}^{-1} \Sigma_{12} A\\
- A \Sigma_{21} \Sigma_{11}^{-1} & A \end{array} \right)
= 
\left(\begin{array}{cc}  \Sigma_{11}^{-1} \Sigma_{12} &  0 \\ 0 & -1 \end{array}  \right)
\left(\begin{array}{cc}   A  &    A\\
 A  & A \end{array} \right)
\left(\begin{array}{cc} \Sigma_{21}\Sigma_{11}^{-1}  & 0 \\ 0 & -1 \end{array} \right).
\]
Since $A$ is positive semi-definite, and the rightmost matrix in the product is the transpose of the leftmost matrix, it follows that $C$ is also positive semi-definite. Hence,  
\eqref{eq:diffM} is non-negative.   

\end{document}